\newcommand{\p}[0]{\mathbb{P}}
\newcommand{\I}[0]{\mathbb{I}}
\newcommand{\R}[0]{\mathbb{R}}
\newcommand{\X}[0]{\mathcal{X}}
\def\roc{{\rm ROC\xspace}}
\def\auc{{\rm AUC\xspace}}
\def\argmax{\mathop{\rm arg\, max}}
\newtheorem{theorem}{Theorem}
\newtheorem{lemma}{Lemma}
\begin{document}
\title{On Tree-based Methods for Similarity Learning}
%
%
\author[1]{Stephan Cl\'emen\c{c}on}
\author[1,2]{Robin Vogel}

\affil[1]{Telecom ParisTech, LTCI, Universit\'e Paris Saclay, France\\
\href{mailto:first.last@telecom-paristech.fr}{first.last@telecom-paristech.fr}}
\affil[2]{IDEMIA, France\\ \href{mailto:first.last@idemia.fr}{first.last@idemia.fr}}
\date{}
%
%

\maketitle              
\begin{abstract}
In many situations, the choice of an adequate similarity measure or metric on
the feature space dramatically determines the performance of machine learning
methods. Building automatically such measures is the specific purpose of
metric/similarity learning. In \cite{VCB18}, similarity learning is formulated
as a pairwise bipartite ranking problem: ideally, the larger the probability
that two observations in the feature space belong to the same class (or share
the same label), the higher the similarity measure between them. From this
perspective, the $\roc$ curve is an appropriate performance criterion and it is
the goal of this article to extend recursive tree-based $\roc$ optimization
techniques in order to propose efficient similarity learning algorithms. The
validity of such iterative partitioning procedures in the pairwise setting is
established by means of results pertaining to the theory of $U$-processes and
from a practical angle, it is discussed at length how to implement them by
means of splitting rules specifically tailored to the similarity learning task.
Beyond these theoretical/methodological contributions, numerical experiments
are displayed and provide strong empirical evidence of the performance of the
algorithmic approaches we propose.

\noindent
{\bf Keywords:} Metric-Learning  $\cdot$ Rate Bound Analysis $\cdot$ Similarity
Learning $\cdot$ Tree-based Algorithms $\cdot$ $U$-processes.
\end{abstract}
\section{Introduction}
Similarity functions are ubiquitous in machine learning, they are the essential ingredient of nearest neighbor rules in classification/regression or K-means/medoids clustering methods for instance and crucially determine their performance when applied to major problems such as biometric identification or recommending system design. The goal of learning automatically from data a similarity function or a metric has been formulated in various ways, depending on the type of similarity feedback available  (\textit{e.g.} labels, preferences), see \cite{Jin2009a,Bellet2015a,Cao2016a,Jain2017a,Verma2015}. A dedicated literature has recently emerged, devoted to this class of problems that is referred to as similarity-learning or metric-learning and is now receiving much attention, see \textit{e.g.} \cite{Bellet2015c} or \cite{Kulis2012a} and the references therein. A popular framework, akin to that of multi-class classification, stipulates that pairwise similarity judgments can be directly deduced from observed class labels: a positive label is assigned to pairs formed of observations in the same class, while a negative label is assigned to those lying in different classes. In this context, similarity learning has been recently expressed as a \textit{pairwise bipartite ranking problem} in \cite{VCB18}, the task consisting in learning a similarity function that ranks the elements of a database by decreasing order of the posterior probability that they share the same label with some arbitrary query data point, as it is the case in important applications. In biometric identification (see \textit{e.g.} \cite{Jain2000a}), the identity claimed by an individual is checked by matching her biometric information, a photo or fingerprints taken at an airport for instance,
with those of authorized people gathered in a data repository of reference (\textit{e.g.} passport photos or fingerprints). Based on a given similarity function and a fixed threshold value, the elements of the database are sorted by decreasing order of similarity score with the query and those whose score exceeds the threshold specified form the collection of matching elements. The $\roc$ curve of a similarity function, \textit{i.e.} the plot of the false positive rate \textit{vs} the true positive rate as the threshold varies, appears in this situation as a natural (functional) performance measure. Whereas several approaches have been proposed to optimize a statistical counterpart of its scalar summary, the $\auc$ criterion ($\auc$ standing for Area Under the $\roc$ Curve), see \cite{McFee2010a,Huo2018a}, it is pointed out in \cite{VCB18} that more local criteria must be considered in practice: ideally, the true positive rate should be maximized under the constraint that the false positive rate remains below a fixed level, usually specified in advance on the basis of operational constraints (see \cite{Jain2000a,Jain2014a} in the case of biometric applications). If the generalization ability of solutions of empirical versions of such pointwise $\roc$ optimization problems (and the situations where fast learning rates are achievable as well) has been investigated at length in \cite{VCB18}, it is very difficult to solve in practice these constrained, generally nonconvex, optimization problems. It is precisely the goal of the present paper to address this algorithmic issue. Our approach builds on an iterative $\roc$ optimization method, referred to as {\sc TreeRank}, that has been proposed in \cite{CV09ieee} (see also \cite{CDV09} as well as \cite{CDV13} for an ensemble learning technique based on this method) and investigated at length in the standard (non pairwise) bipartite ranking setting. In this article, we establish statistical guarantees for the validity of the {\sc TreeRank} methodology, when extended to the similarity learning framework (\textit{i.e.} pairwise bipartite ranking), in the form of generalization rate bounds related to the $\sup$ norm in the $\roc$ space and discuss issues related to its practical implementation. In particular, the \textit{splitting rules} recursively implemented in the variant we propose are specifically tailored to the similarity learning task and produce symmetric tree-based scoring rules that may thus serve as similarity functions. Numerical experiments based on synthetic and real data are also presented here, providing strong empirical evidence of the relevance of this approach for similarity learning.\\
The paper is organized as follows. The rigorous formulation of similarity learning as pairwise bipartite ranking is briefly recalled in section \ref{sec:background}, together with the main principles underlying the {\sc TreeRank} algorithm for $\roc$ optimization. In section \ref{sec:main}, theoretical results proving the validity of the {\sc TreeRank} method in the pairwise setup are stated and practical implementation issues are also discussed. Section \ref{sec:exp} displays illustrative experimental results. 

\section{Background and Preliminaries}\label{sec:background}
We start with recalling key concepts of similarity learning and its natural connection with $\roc$ analysis and next briefly describe the algorithmic principles underlying the {\sc TreeRank} methodology. Throughout the article, the Dirac mass at any point $x$ is denoted by $\delta_x$, the indicator function of any event $\mathcal{E}$ by $\mathbb{I}\{\mathcal{E}\}$, and the pseudo-inverse of any cdf $F(u)$ on $\mathbb{R}$ by $F^{-1}(t)=\inf\{v\in \mathbb{R}:\; F(v)\geq t  \}$.

\subsection{Similarity Learning as Pairwise Bipartite Ranking}
We place ourselves in the probabilistic setup of multi-class classification here: $Y$ is a random label, taking its values in $\{1,\; \ldots,\; K \}$ with $K\geq 1$ say, and $X$ is a random vector defined on the same probability space, valued in a feature space $\mathcal{X}\subset \mathbb{R}^d$ with $d\geq 1$ and modelling some information hopefully useful to predict $Y$. The marginal distribution of $X$ is denoted by $\mu(dx)$, while the prior/posterior probabilities are $p_k=\mathbb{P}\{Y=k\}$ and $\eta_k(X)=\mathbb{P}\{Y=k \mid X  \}$, $k=1,\; \ldots,\; K$. The conditional distribution of the r.v. $X$ given $Y=k$ is denoted by $\mu_k$. The distribution $P$ of the generic pair $(X,Y)$ is entirely characterized by $(\mu,\; (\eta_1,\; \ldots,\; \eta_K))$. Equipped with these notations, we have $\mu=\sum_{k}p_k\mu_k$ and $p_k=\int_{\mathcal{X}}\eta_k(x)\mu(dx)$ for $k\in\{1,\;\ldots,\; K\}$.
In a nutshell, the goal pursued in this similarity learning framework is to learn from a training dataset $\mathcal{D}_n=\{(X_1,Y_1),\; \ldots,\; (X_n,Y_n)  \}$ composed of independent observations with distribution $P$ a similarity (scoring) function, that is a measurable symmetric function $s:\mathcal{X}^2\to \mathbb{R}_+$ (\textit{i.e.} $\forall (x,x')\in \mathcal{X}^2$, $s(x,x')=s(x',x)$) such that, given an independent copy $(X',Y')$ of $(X,Y)$, the larger the similarity score between the input observations $X$ and $X'$, the higher the probability that they share the same label (\textit{i.e.} that $Y=Y'$) should be. We denote by $\mathcal{S}$ the ensemble of all similarity functions.\\
\noindent {\bf Optimal rules.} Given this informal objective, the set of optimal similarity functions is obviously formed of strictly increasing transforms of the (symmetric) posterior probability $\eta(x,x')=\mathbb{P}\{Y=Y'\mid (X,X')=(x,x')\}$, namely
$$
\mathcal{S}^*=\{T\circ \eta:\;\; T:Im(\eta)\to \mathbb{R}_+\text{ borelian, strictly increasing}\},
$$
denoting by $Im(\eta)$ the support of the r.v. $\eta(X,X')=\sum_k\eta_k(X)\eta_k(X')$. A similarity function $s^*\in \mathcal{S}^*$ defines the optimal preorder\footnote{A preorder on a set $\X$ is any reflexive and transitive binary relationship on $\X$. A preorder is an order if, in addition, it is antisymmetrical.} $\preceq^*$ on the product space $\X\times \X$: for all $(x_1,x_2,x_3,x_4)\in\X^4$, $x_1$ and $x_2$ are more similar to each other than $x_3$ and $x_4$ iff $\eta(x_1, x_2)\geq \eta(x_3,x_4)$, and one then writes $(x_3,x_4)\preceq^*(x_1,x_2)$. For any query $x\in \mathcal{X}$, $s^*$ also defines a preorder $\preceq^*_{x}$ on the input space $\X$, that enables us to rank optimally all possible observations by increasing degree of similarity to $x$: for any $(x_1,x_2)\in \X^2$, $x_1$ is more similar to $x$ than $x_2$ (one writes $x_2 \preceq^*_{x} x_1$) iff $(x,x_2)\preceq^*(x,x_1)$,  that is $\eta(x,x_2)\leq \eta(x,x_1)$. 
\\
\noindent {\bf Pointwise $\roc$ curve optimization.} As highlighted in \cite{VCB18}, similarity learning can be formulated as a \textit{bipartite ranking} problem on the product space $\X\times \X$ where, given two independent realizations $(X,Y)$ and $(X',Y')$ of $P$, the input r.v. is the pair $(X,X')$ and the binary label is $Z=2\mathbb{I}\{Y=Y'  \}-1$, see \textit{e.g.} \cite{Clemencon08Ranking}. In bipartite ranking, the gold standard by which the performance of a scoring function $s$ is measured is the $\roc$ curve (see \textit{e.g.} \cite{Fawcett06} for an account of $\roc$ analysis and its applications.): one evaluates how close the preorder induced by $s$ to $\preceq^*$ is by plotting the parametric curve $t\in \mathbb{R}_+\mapsto (F_{s,-}(t),F_{s,+}(t))$,
where
\begin{equation*}
F_{s,-}(t)=\mathbb{P}\{ s(X,X')> t  \mid Z=-1 \},\; 
F_{s,+}(t)= \mathbb{P}\{ s(X,X')> t  \mid Z=+1 \},
\end{equation*}
where possible jumps are connected by line segments. This P-P plot is referred to as the $\roc$ curve of $s(x,x')$ and can be viewed as the graph of a continuous function $\alpha\in (0,1)\mapsto \roc_s(\alpha)$, where $\roc_s(\alpha)=F_{s,+}\circ F_{s,-}^{-1}(\alpha)$ at any point $\alpha\in (0,1)$ such that $F_{s,-}\circ F_{s,-}^{-1}(\alpha)=\alpha$. The curve $\roc_s$ informs us about the capacity of $s$ to discriminate between pairs with same labels and pairs with different labels: the stochastically larger  than $F_{s,-}$ the distribution $F_{s,+}$, the higher $\roc_s$. It corresponds to the type I error \textit{vs} power plot (false positive rate \textit{vs} true positive rate) of the statistical test $\mathbb{I}\{ s(X,X')> t \}$ when the null hypothesis stipulates that the labels of $X$ and $X'$ are different (\textit{i.e.} $Y\ne Y'$) and defines a partial preorder on the set $\mathcal{S}$: one says that a similarity function $s_1$ is more accurate than another one $s_2$ when, for all $\alpha\in (0,1)$, $\roc_{s_2}(\alpha)\leq \roc_{s_1}(\alpha)$. The optimality of the elements of $\mathcal{S}^*$ w.r.t. this partial preorder immediately results from a classic Neyman-Pearson argument: $\forall (s,s^*)\in \mathcal{S}\times \mathcal{S}^*$, $\roc_{s}(\alpha)\leq \roc_{s^*}(\alpha)=\roc_{\eta}(\alpha):=\roc^*(\alpha)$ for all $\alpha\in (0,1)$. For simplicity, we assume here that the conditional cdf of $\eta(X,X')$ given $Z=-1$ is invertible.
The accuracy of any $s\in \mathcal{S}$ can be measured by:
 \begin{equation}
 D_p(s,s^*)=\vert\vert \roc_s-\roc^*\vert\vert_p,
 \end{equation}
 where $s^*\in \mathcal{S}^*$ and $p\in[1,+\infty]$.
 When $p=1$, one may write $D_1(s,s^*)=\auc^*-\auc(s)$, where $\auc(s)=\int_{\alpha=0}^1\roc_s(\alpha)d\alpha$ is the \textit{Area Under the $\roc$ Curve} ($\auc$ in short) and $\auc^*=\auc(\eta)$ is the maximum $\auc$. Minimizing $D_1(s,s^*)$ boils down thus to maximizing the $\roc$ summary $\auc(s)$, whose popularity arises from its interpretation as the \textit{rate of concording pairs}:
\begin{multline*} \auc(s)=\mathbb{P}\left\{s(X_1,X_1')<s(X_2,X_2') \mid (Z_1,Z_2)=(-1,+1) \right\}\\ +\frac{1}{2}\mathbb{P}\left\{s(X_1,X_1')=s(X_2,X_2') \mid (Z_1,Z_2)=(-1,+1) \right\},
\end{multline*}
where $((X_1,X'_1),Z_1)$ and $((X_2,X_2'),Z_2)$ denote independent copies of $((X,X'),Z)$.
A simple empirical counterpart of $\auc(s)$  can be derived from this formula, paving the way for the implementation of "empirical risk minimization" strategies, see \cite{Clemencon08Ranking} (the algorithms proposed to optimize the $\auc$ criterion or surrogate performance measures are too numerous to be listed exhaustively here). However, as mentioned precedingly, in many applications, one is interested in finding a similarity function that optimizes the $\roc$ curve at specific points $\alpha\in (0,1)$.
The superlevel sets of similarity functions in $\mathcal{S}^*$ define the solutions of pointwise $\roc$ optimization problems in this context.
In the above framework, it indeed follows from Neyman Pearson's lemma that the test statistic of type I error less than $\alpha$ with maximum power is the indicator function of the set $\mathcal{R}^*_{\alpha}=\{(x,x')\in \X^2:\; \eta(x,x')\geq Q^*_{\alpha}  \}$, where $Q^*_{\alpha} $ is the conditional quantile of the r.v. $\eta(X,X')$ given $Z=-1$ at level $1-\alpha$. Considering similarity functions that are bounded by $1$ only, it corresponds to the unique solution of the problem:
\begin{equation*}
\max_{{\scriptsize \begin{array}{c}
s:\X^2\rightarrow [0,1],\\
\text{ borelian}
\end{array}}
} \mathbb{E}[s(X,X') \mid Z=+1] \quad \text{subject to} \quad \mathbb{E}[s(X,X') \mid Z=-1 ] \leq \alpha.
\end{equation*}
Though its formulation is natural, this constrained optimization problem is very difficult to solve in practice, as discussed at length in \cite{VCB18}. This suggests the extension to the similarity ranking framework of the {\sc TreeRank} approach for $\roc$ optimization (see \cite{CV09ieee} and \cite{CDV09}), recalled below. Indeed, in the standard (non pairwise) statistical learning setup for bipartite ranking, whose probabilistic framework is the same as that of binary classification and stipulates that training data are i.i.d. labeled observations, this recursive technique builds (piecewise constant) scoring functions $s$, whose accuracy can be guaranteed in terms of $\sup$ norm (\textit{i.e.} for which $D_{\infty}(s,s^*)$ can be controlled) and it is the essential purpose of the subsequent analysis to prove that this remains true when the training observations are of the form $\{((X_i,X_j),\; Z_{i,j}):\; 1\leq i<j\leq n \}$, where $Z_{i,j}=2\mathbb{I}\{Y_i=Y_j  \}-1$ for $1\leq i<j\leq n$, and are thus far from being independent. Regarding its implementation, attention should be paid to the fact that the splitting rules for recursive partitioning of the space $\mathcal{X}\times \mathcal{X}$ must ensure that the decision functions produced by the algorithm fulfill the symmetric property.

\subsection{Recursive $\roc$ Curve Optimization - The TreeRank Algorithm}\label{subsec:treerank}
Because they offer a visual model summary in the form of an easily interpretable binary tree graph, decision trees remain very popular among practicioners, see \textit{e.g.} \cite{cart84} or \cite{Quinlan}. In general, predictions are computed through a hierarchical combination of elementary rules comparing the value taken by a (quantitative) component of the input information (the \textit{split variable}) to a certain threshold (the \textit{split value}). In contrast to (supervised) learning problems such as classification/regression, which are of local nature, predictive rules for a global problem such as \textit{similarity learning} cannot be described by a simple (tree-structured) partition of $\mathcal{X}\times \mathcal{X}$: the (symmetric) cells corresponding to the terminal leaves of the binary decision tree must be sorted in order to define a similarity function. 

\noindent {\bf Similarity Trees.} We define a \textit{similarity tree} as a binary tree whose leaves all correspond to symmetric subsets $\mathcal{C}$ of the product space $\mathcal{X}\times \mathcal{X}$ (\textit{i.e.} $\forall(x,x')\in\mathcal{X}^2$; $(x,x')\in\mathcal{C}\Leftrightarrow (x',x)\in \mathcal{C}$) and is equipped with a 'left-to-right' orientation, that defines a tree-structured collection of similarity functions. Incidentally, the symmetry property makes it a specific \textit{ranking tree}, using the terminology introduced in \cite{CV09ieee}.
The root node of a  tree $\mathcal{T}_J$ of depth $J\geq 0$ corresponds to the whole space $\mathcal{X}\times \mathcal{X}$: $\mathcal{C}_{0,0}=\mathcal{X}^2$, while each internal node $(j,k)$ with $j<J$ and $k\in \{0,\; \ldots,\; 2^j -1 \}$ represents a subset $\mathcal{C}_{j,k}\subset\X^2$, whose left and right siblings respectively correspond to (symmetric) disjoint subsets $\mathcal{C}_{j+1, 2k}$ and $\mathcal{C}_{j+1, 2k+1}$ such that $\mathcal{C}_{j,k}=\mathcal{C}_{j+1, 2k}\cup \mathcal{C}_{j+1, 2k+1}$. Equipped with the left-to-right orientation, any subtree $\mathcal{T}\subset \mathcal{T}_J$ defines a preorder on $\mathcal{X}^2$: the degree of similarity being the same for all pairs $(x,x')$ lying in the same terminal cell of $\mathcal{T}$. The similarity function related to the oriented tree $\mathcal{T}$ can be written as:
\begin{equation*}\label{eq:score_tree}
\forall (x,x')\in \mathcal{X}^2,\;\; s_{\mathcal{T}}(x,x')=\sum_{\mathcal{C}_{j,k}:\text{ terminal leaf of }\mathcal{T}}2^J\left (1-\frac{k}{2^j}\right)\cdot \mathbb{I}\{(x,x')\in \mathcal{C}_{j,k}  \}.
\end{equation*}
Observe that its symmetry results from that of the $\mathcal{C}_{j,k}$'s. 
 \ 
The $\roc$ curve of the similarity function $s_{\mathcal{T}}(x,x')$ is the piecewise linear curve connecting the knots:
 $$
 (0,0)\text{ and }\; \left(\sum_{l=0}^kF_-(\mathcal{C}_{j,l}),\; \sum_{l=0}^kF_+(\mathcal{C}_{j,l})\right)
 \text{ for all terminal leaf } \mathcal{C}_{j,k}\text{ of }\mathcal{T},$$
 denoting by $F_{\sigma}$ the conditional distribution of $(X,X')$ given $Z=\sigma1$, $\sigma\in\{-.\; +\}$. Setting $p_+=\mathbb{P}\{Z=+1\}=\sum_kp_k^2$, we have $F_+=(1/p_+)\sum_kp_k^2\cdot \mu_k\otimes \mu_k$ and $F_-=(1/(1-p_+))\sum_{k\neq l}p_kp_l\cdot \mu_k\otimes\mu_l$.
 A statistical version can be computed by replacing the $F_{\sigma}(\mathcal{C}_{j,l})$'s by their empirical counterpart. 
\\
\noindent {\bf Growing the Similarity Tree.}
The {\sc TreeRank} algorithm, a bipartite ranking technique optimizing the $\roc$ curve in a recursive fashion, has been introduced in \cite{CV09ieee} and its properties have been investigated in \cite{CDV09} at length. Its output consists of a tree-structured scoring rule \eqref{eq:score_tree} with a $\roc$ curve that can be viewed as a piecewise linear approximation of $\roc^*$ obtained by a Finite Element Method with implicit scheme and is proved to be nearly optimal in the $D_1$ sense under mild assumptions. The growing stage is performed as follows. At the root, one starts with a constant similarity function $s_1(x,x')= \mathbb{I}\{(x,x')\in \mathcal{C}_{0,0}\}\equiv 1$ and after $m=2^j+k$ iterations, $0\leq k<2^j$, the current similarity function is $$s_m(x,x')=\sum_{l=0}^{2k-1}(m-l)\cdot \mathbb{I}\{(x,x')\in \mathcal{C}_{j+1,l}\}+\sum_{l=k}^{2^j-1}(m-k-l)\cdot \mathbb{I}\{(x,x')\in \mathcal{C}_{j,l}\}$$ and the cell $\mathcal{C}_{j,k}$ is split so as to form a refined version of the similarity function,
$$s_{m+1}(x,x')=\sum_{l=0}^{2k}(m-l)\cdot \mathbb{I}\{(x,x')\in \mathcal{C}_{j+1,l}\}+\sum_{l=k+1}^{2^j-1}(m-k-l)\cdot \mathbb{I}\{(x,x')\in \mathcal{C}_{j,l}\}$$ namely, with maximum (empirical) $\auc$. Therefore, it happens that this problem boils down to solve a cost-sensitive binary classification problem on the set $\mathcal{C}_{j,k}$, see subsection 3.3 in \cite{CDV09}. Indeed, one may write the $\auc$ increment as
$$\auc(s_{m+1})-\auc(s_m)=\frac{1}{2}F_-(\mathcal{C}_{j,k})F_+(\mathcal{C}_{j,k})
\times (1-\Lambda(\mathcal{C}_{j+1,2k}\mid \mathcal{C}_{j,k})),$$
$$\text{where }\Lambda(\mathcal{C}_{j+1,2k}\mid \mathcal{C}_{j,k})\overset{def}{=}F_+(\mathcal{C}_{j,k}\setminus \mathcal{C}_{j+1,2k})/F_+(\mathcal{C}_{j,k})+F_-(\mathcal{C}_{j+1,2k})/F_-(\mathcal{C}_{j,k}).$$ Setting $p=F_+(\mathcal{C}_{j,k})/(F_-(\mathcal{C}_{j,k})+F_+(\mathcal{C}_{j,k}))$, the crucial point of the {\sc TreeRank} approach is that the quantity $2p(1-p)\Lambda(\mathcal{C}_{j+1,2k}\mid \mathcal{C}_{j,k})$ can be interpreted as the cost-sensitive error of a classifier on $\mathcal{C}_{j,k}$ predicting positive label for any pair lying in $\mathcal{C}_{j+1,2k}$ and negative label fo all pairs in $\mathcal{C}_{j,k}\setminus\mathcal{C}_{j+1,2k}$ with cost $p$ (respectively, $1-p$) assigned to the error consisting in predicting label $+1$ given $Z=-1$ (resp., label $-1$ given $Z=+1$), balancing thus the two types of error. Hence, at each iteration of the similarity tree growing stage, the {\sc TreeRank} algorithm calls a \textit{cost-sensitive} binary classification algorithm, termed {\sc LeafRank}, in order to solve a statistical version of the problem above (replacing the theoretical probabilities involved by their empirical counterparts) and split $\mathcal{C}_{j,k}$ into $\mathcal{C}_{j+1,2k}$ and $\mathcal{C}_{j+1,2k+1}$. As described at length in \cite{CDV09}, one may use cost-sensitive versions of celebrated binary classification algorithms such as {\sc CART} or {\sc SVM} for instance as {\sc LeafRank} procedure, the performance depending on their ability to capture the geometry of the level sets $\mathcal{R}^*_{\alpha}$ of the posterior probability $\eta(x,x')$. As highlighted above, in order to apply the {\sc TreeRank} approach to similarity learning, a crucial feature the {\sc LeafRank} procedure implemented must have is the capacity to split a region in subsets that are both stable under the reflection $(x,x')\in \mathcal{X}^2\mapsto (x',x)$. This point is discussed in the next section. Rate bounds  for the {\sc TreeRank} method in the $\sup$ norm sense are also established therein in the statistical framework of similarity learning, when the set of training examples $\{((X_i,X_j),\; Z_{i,j} \}_{i<j}$ is composed of non independent observations with binary labels, formed from the original multi-class classification dataset $\mathcal{D}_n$.



\section{A Tree-Based Approach to Similarity Learning}\label{sec:main}
We now investigate how the {\sc TreeRank} method for $\roc$ optimization recalled in the preceding section can be extended to the framework of similarity-learning and next establish learning rates in $\sup$ norm in this context. 

\subsection{A Similarity-Learning Version of {\sc TreeRank}}
From a statistical perspective, a learning algorithm can be derived from the recursive approximation procedure recalled in the previous section, simply by replacing the quantities $F_{\sigma}(\mathcal{C})$, $\sigma\in\{-,\; +\}$ and $\mathcal{C}\subset \mathcal{X}\times \mathcal{X}$ borelian, by their empirical counterparts based on the dataset $\mathcal{D}_n$:
 \begin{equation}\label{eq:ratio}
 \widehat{F}_{\sigma,n}(\mathcal{C})=\frac{1}{n_{\sigma}}\sum_{i<j}\mathbb{I}\{(X_i,X_j)\in \mathcal{C},\; Z_{i,j}=\sigma 1  \},
 \end{equation}
 with $n_{\sigma}=(2/(n(n-1)))\sum_{i<j}\mathbb{I}\{Z_{i,j}=\sigma 1  \}$.
 Observe incidentally that the quantities \eqref{eq:ratio} are by no means i.i.d. averages, but take the form of ratios of $U$-statistics of degree two (\textit{i.e.} averages over pairs of observations, \textit{cf} \cite{Lee90}), see section 3 in \cite{VCB18}.  For this reason, a specific rate bound analysis (ignoring bias issues) guaranteeing the accuracy of the {\sc TreeRank} approach in the similarity learning framework is carried out in the next subsection. 

 \begin{center}
 \fbox{
 \begin{minipage}[t]{11.5cm}
 \medskip
  {\small
 \begin{center}
 {\sc The Similarity TreeRank Algorithm}
 \end{center}
 

 {\bf Input.} Maximal depth $D\geq 1$ of the similarity tree, class $\mathcal{A}$ of measurable and symmetric subsets of $\mathcal{X}\times \mathcal{X}$, training dataset $\mathcal{D}_n=\{(X_1,Y_1),\; \ldots,\; (X_n,Y_n)  \}$.
 
 \begin{enumerate}
 \item ({\sc Initialization.}) Set $\mathcal{C}_{0,0}=\mathcal{X}\times \mathcal{X}$, $\alpha_{d,0}=\beta_{d,0}=0$ and $\alpha_{d,2^d}=\beta_{d,2^d}=1$
 for all $d\ge 0$.
 
 
 \item ({\sc Iterations.}) For $d=0,\;\ldots,\;D-1$ and $k=0,\;\ldots,\; 2^{d}-1$:
 
 
 \begin{enumerate}
 \item ({\sc Optimization step.}) Set the entropic measure:
 \begin{eqnarray*}
 \Lambda_{d, k+1}(\mathcal{C})& =& (\alpha_{d,k+1}-\alpha_{d,k}) \widehat{F}_{+,n}(\mathcal{C})
 -(\beta_{d,k+1}-\beta_{d,k})\widehat{F}_{-,n}(\mathcal{C})~.
 \end{eqnarray*}
 Find the best subset $\mathcal{C}_{d+1,2k}$ of the cell $\mathcal{C}_{d,k}$ in the $\auc$ sense:
 \begin{equation}\label{eq:cost_classif}
 \mathcal{C}_{d+1,2k}=\argmax_{\mathcal{C}\in\mathcal{A}, ~\mathcal{C}\subset \mathcal{C}_{d,k}}
 \widehat{\Lambda}_{d, k+1}(\mathcal{C})~.
 \end{equation} 
 Then, set $\mathcal{C}_{d+1,2k+1}=\mathcal{C}_{d,k} \setminus \mathcal{C}_{d+1,2k}$.
 
 \item ({\sc Update.}) Set
 \begin{equation*}
 \alpha_{d+1,2k+1}= \alpha_{d,k}+\widehat{F}_{-,n}(\mathcal{C}_{d+1,2k}),\;
 \beta_{d+1,2k+1}=\beta_{d,k} +\widehat{F}_{+,n}(\mathcal{C}_{d+1,2k})
 \end{equation*}
 \begin{equation*}
\text{and } \alpha_{d+1,2k+2} = \alpha_{d,k+1},\; 
 \beta_{d+1,2k+2} = \beta_{d,k+1}~.
 \end{equation*}
 
 \end{enumerate}
 
 
 \item ({\sc Output.}) After $D$ iterations, get the piecewise constant similarity function:
 \begin{equation}\label{eq:sim_funct}
 s_D(x,x')=\sum_{k=0}^{2^{D}-1} (2^{D}-k) ~\I\{(x,x')\in \mathcal{C}_{D,k}\},
 \end{equation}
 together with an estimate of the curve $\roc(s_D,.)$, namely the broken line $\widehat{\roc}(s_D,.)$ that connects the knots $\{(\alpha_{D,k}, \beta_{D,k}):\; k=0,\ldots, 2^D \}$, and
  the following estimate of $\auc(s_D)$:
 \begin{eqnarray*}
 \widehat{\auc}(s_D)&=&\int_{\alpha=0}^1 \widehat{\roc}(s_D,\alpha)d\alpha
 =\frac{1}{2}+\frac{1}{2}\sum_{k=0}^{2^{D-1}-1}\widehat{\Lambda}_{D-1,k+1}(\mathcal{C}_{D,2k}).
 \end{eqnarray*}
 \end{enumerate}
 }
 \end{minipage}
 }
 \end{center}
The symmetry property of the function \eqref{eq:sim_funct} output by the learning algorithm is directly inherited from that of the candidate subsets $\mathcal{C}\in\mathcal{A}$ of the product space $\mathcal{X}\times\mathcal{X}$ among which the $\mathcal{C}_{d,k}$'s are selected. We new explain at length how to perform the optimization step \eqref{eq:cost_classif} in practice in the similarity learning context. 
\\
\noindent{\bf Splitting for Similarity Learning.} As recalled in subsection \ref{subsec:treerank}, solving \eqref{eq:cost_classif} boils down to finding the best classifier on $\mathcal{C}_{d,k}\subset \mathcal{X}^2$ of the form 
$$g_{\mathcal{C}\mid \mathcal{C}_{d,k}}(x,x')=\mathbb{I}\{(x,x')\in \mathcal{C} \}-\mathbb{I}\{(x,x')\in \mathcal{C}\setminus \mathcal{C}_{d,k}  \}\; \text{ with } \mathcal{C}\subset \mathcal{C}_{d,k},\;\; \mathcal{C}\in\mathcal{A},$$ in the empirical $\auc$ sense, that is to say that minimizing a statistical version of the cost-sensitive classification error based on $\{((X_i,X_j),\; Z_{i,j}):\;\; 1\leq i<j\leq n,\; (X_i,X_j)\in  \mathcal{C}_{d,k}  \}$
$$
\Lambda( \mathcal{C} \mid \mathcal{C}_{d,k} )=\frac{\mathbb{P}\{g_{\mathcal{C}\mid \mathcal{C}_{d,k}}(X,X')=1 \mid Z=-1\}}{\mathbb{P}\{(X,X')\in \mathcal{C}_{d,k} \mid Z=-1\}}+\frac{\mathbb{P}\{g_{\mathcal{C}\mid \mathcal{C}_{d,k}}(X,X')=-1 \mid Z=1\}}{\mathbb{P}\{(X,X')\in \mathcal{C}_{d,k} \mid Z=1\}}.
$$
Notice that, equipped with the notations previously introduced, the statistical version of $\Lambda( \mathcal{C} \mid \mathcal{C}_{d,k} )$ is 
$\Lambda_{d, k+1}(\mathcal{C})/\left((\alpha_{d,k+1}-\alpha_{d,k})(\beta_{d,k+1}-\beta_{d,k})\right)$.
In \cite{CDV09}, it is highlighted that, in the standard ranking bipartite setup, any (cost-sensitive) classification algorithm (\textit{e.g.} Neural Networks, {\sc CART}, {\sc Random Forest}, SVM, nearest neighbours) can be possibly used for splitting, whereas, in the present framework, classifiers are defined on product spaces and the symmetry issue must be addressed.  
For simplicity, assume that $\mathcal{X}$ is a subset of the space $\mathbb{R}^q$, $q\geq 1$, whose canonical basis is denoted by $(e_1,\; \ldots,\; e_q)$. Denote by $P_{V}(x,x')$ the orthogonal projection of any point $(x,x')$ in $\mathbb{R}^q\times \mathbb{R}^q$ equipped with its usual Euclidean structure onto the subspace $V=Span((e_1,\; e_1),\; \ldots,\; (e_q,\; e_q) )$. Let $W$ be $V$'s orthogonal complement in $\mathbb{R}^q\times \mathbb{R}^q$. For any $(x,x')\in \mathcal{X}^2$, denote by $f(x,x')=(f_1(x,x'),\; \ldots,\; f_{2q}(x,x'))$ the $2q$-dimensional vector, whose first $q$ components are the coordinates of the projection $P_{V}(x,x')$ of $(x,x')$ onto the subspace $V$ in an orthonormal basis of $V$ (say $\{(1/\sqrt{2})(e_1,\; e_1),\; \ldots,\; (1/\sqrt{2})(e_q,\; e_q)  \}$ for instance) and whose last components 
are formed by the \textit{absolute values} of the coordinates of the projection $P_{W}(x,x')$ of $(x,x')$ onto $W$ expressed in a given orthonormal basis (say $\{(1/\sqrt{2})(e_1,\; -e_1),\; \ldots,\; (1/\sqrt{2})(e_q,\; -e_q)  \}$ for instance). Observing that, by construction,
$
f(x,x')=f(x',x) \text{ for all } (x,x')\in \mathcal{X}^2$, our proposal relies on the following result (whose proof is straightforward and left to the reader).
\begin{lemma} 
    Let $S:\mathcal{X}^2\rightarrow \mathbb{R}$. Then, $S$ is symmetric iff there exists $s:\mathbb{R}^{q}\times \mathbb{R}_+^q\rightarrow \mathbb{R}$ such that:
$
\forall (x,x')\in \mathcal{X}^2,\;\; S(x,x')=(s\circ f )(x,x')$.
\label{lemma:sym_transform}
\end{lemma}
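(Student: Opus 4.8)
The plan is to prove the two implications separately, working throughout in coordinates. Writing $x=(x_1,\ldots,x_q)$ and $x'=(x_1',\ldots,x_q')$, the orthonormal bases chosen for $V$ and its complement $W$ give, for $i=1,\ldots,q$,
\begin{equation*}
f_i(x,x')=\frac{x_i+x_i'}{\sqrt 2},\qquad f_{q+i}(x,x')=\frac{|x_i-x_i'|}{\sqrt 2},
\end{equation*}
which makes the identity $f(x,x')=f(x',x)$ recorded in the text transparent: swapping $x$ and $x'$ leaves each sum $x_i+x_i'$ and each $|x_i-x_i'|$ unchanged. The reverse implication is then purely formal: if $S=s\circ f$, then $S(x',x)=s(f(x',x))=s(f(x,x'))=S(x,x')$ for all $(x,x')\in\mathcal{X}^2$, so $S$ is symmetric.

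The substance of the statement is the direct implication. Given a symmetric $S$, the natural construction is to set $s(u):=S(x,x')$ whenever $u=f(x,x')$ belongs to $\mathrm{Im}(f)\subset \mathbb{R}^q\times\mathbb{R}_+^q$, and to extend $s$ arbitrarily (say by $0$) off the image. This yields the announced factorization precisely when $s$ is well defined, i.e.\ when $f(x,x')=f(y,y')$ implies $S(x,x')=S(y,y')$. I would therefore analyse the fibers of $f$: the formulas above show that $f(x,x')=f(y,y')$ is equivalent to $y_i+y_i'=x_i+x_i'$ together with $|y_i-y_i'|=|x_i-x_i'|$ for every $i$, which forces, coordinate by coordinate, $(y_i,y_i')\in\{(x_i,x_i'),(x_i',x_i)\}$.

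The main obstacle is the passage from this fiber description to the equality $S(x,x')=S(y,y')$. For $q=1$ the fiber coincides exactly with the swap-orbit $\{(x,x'),(x',x)\}$, so the symmetry of $S$ closes the argument immediately and the lemma holds cleanly. For $q\ge 2$, however, the coordinatewise sign choices $(y_i,y_i')\in\{(x_i,x_i'),(x_i',x_i)\}$ may be made independently across $i$: the componentwise absolute values make $f$ invariant under the full group $(\mathbb{Z}/2)^q$ acting on the $W$-coordinates rather than under the single global swap, so the fiber is in general strictly larger than the orbit. Hence well-definedness cannot be deduced from symmetry of $S$ alone, and I would expect the direct implication to require either the restriction $q=1$ or a replacement of the last $q$ features by a genuine separating invariant of the involution on $W$ (for instance the products $(x_1-x_1')(x_j-x_j')$, $j=1,\ldots,q$) before the factorization can be guaranteed.
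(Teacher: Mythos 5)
Your analysis is correct, and it identifies a genuine error in the paper: the lemma holds only in the ``if'' direction, and the paper offers no argument that could be checked (the proof is declared straightforward and left to the reader). Your reverse implication is exactly right, since $f(x,x')=f(x',x)$ immediately gives the symmetry of $s\circ f$. For the direct implication, your fiber description can be turned into an explicit counterexample confirming your objection: take $q=2$ and $S(x,x')=(x_1-x_1')(x_2-x_2')$, which is symmetric because swapping $x$ and $x'$ flips the sign of both factors. With $x=(0,0)$, $x'=(1,1)$, $y=(0,1)$, $y'=(1,0)$ (or a scaled copy of this configuration, which exists in any $\mathcal{X}$ with nonempty interior), all coordinate sums equal $1$ and all absolute coordinate differences equal $1$, so $f(x,x')=f(y,y')$, yet $S(x,x')=1\neq -1=S(y,y')$; hence no $s:\mathbb{R}^q\times\mathbb{R}_+^q\to\mathbb{R}$ with $S=s\circ f$ can exist. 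The structural reason is the one you give: the swap acts on $W$ as $u\mapsto -u$, but componentwise absolute value quotients by the larger group $\{\pm 1\}^q$, so for $q\ge 2$ the map $f$ identifies pairs that are not swap-equivalent, and factoring through $f$ is strictly stronger than symmetry. For $q=1$ the two notions coincide and the lemma is true, as you note.

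Two caveats. First, your proposed repair is itself insufficient: the invariants $(x_1-x_1')(x_j-x_j')$, $j=1,\ldots,q$, all vanish when $x_1=x_1'$, so they do not separate, e.g., $x-x'=(0,1,0)$ from $x-x'=(0,0,1)$. A correct repair keeps $x+x'$ together with \emph{all} pairwise products $(x_i-x_i')(x_j-x_j')$, $1\le i\le j\le q$, i.e.\ the matrix $(x-x')(x-x')^{\top}$, which determines $x-x'$ up to a global sign and hence the unordered pair $\{x,x'\}$; alternatively one can use a measurable canonical sign choice on $W$ (flip $x-x'$ so that its first nonzero coordinate is positive). Second, it is worth noting which direction the paper actually uses: the {\sc Symmetric LeafRank} procedure only needs $s\circ f$ to be symmetric --- your easy direction --- for the output similarity function to be valid, so the algorithm itself is not invalidated (the componentwise $(x\vee x',x\wedge x')$ transform in the appendix has the same non-separating defect and is likewise harmless for this purpose). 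What fails is the implicit universality claim that restricting classifiers to the form $G\circ f$ loses no generality: for $q\ge 2$ it genuinely restricts the class of symmetric decision rules that can be represented.
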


In order to get splits that are symmetric w.r.t. the reflection $(x,x')\mapsto (x',x)$, we propose to build directly classifiers of the form $(G\circ f)(x,x')$. In practice, this splitting procedure referred to as {\sc Symmetric LeafRank} and summarized below simply consists in using as input space $\mathbb{R}^{q}\times \mathbb{R}_+^q$ rather than $\mathbb{R}^{2q}$ and considering as training labeled observations the dataset $\{(f(X_i,X_j),\; Z_{i,j}):\;\; 1\leq i<j\leq n,\; (X_i,X_j)\in  \mathcal{C}_{d,k}  \}$ when running a cost-sensitive classification algorithm.
Just like in the original version of the {\sc TreeRank} method, the growing stage can be followed by a pruning procedure, where children of a same parent node are recursively merged in order to produce a similarity subtree that maximizes an estimate of the $\auc$ criterion, based on cross-validation usually, one may refer to section 4 in \cite{CDV09} for further details. In addition, as in the standard bipartite ranking context, the {\sc Ranking Forest} approach (see \cite{CDV13}), an \textit{ensemble learning} technique based on {\sc TreeRank} that combines aggregation and randomization, can be implemented to dramatically improve stability and accuracy of similarity tree models both at the same time, while preserving their advantages (\textit{e.g.} scalability, interpretability).

\begin{center}
\fbox{
\begin{minipage}[t]{11.5cm}{
\medskip
{\small
\begin{center}
{\sc Symmetric LeafRank}
\end{center}

\begin{itemize}
\item {\bf Input.} Pairs $\{((X_i,X_j),\; Z_{i,j}):\; 1\leq i<j\leq n,\; (X_i,X_j)\in \mathcal{C}_{d,k}  \}$ lying in the (symmetric) region to be split. Classification algorithm $\mathcal{A}$.
\item {\bf Cost.} Compute the number of positive pairs lying in the region $\mathcal{C}_{d,k}$ 
$$
p=\frac{\sum_{1\leq i<j\leq n} \mathbb{I}\{ (X_i,X_j)\in \mathcal{C}_{d,k},\; Z_{i,j}=+1 \}}{\sum_{1\leq i<j\leq n} \mathbb{I}\{ (X_i,X_j)\in \mathcal{C}_{d,k} \}}
$$  
\item {\bf Cost-sensitive classification.} Based on the labeled observations
$$
\left\{\left(f(X_i,X_j),\; Z_{i,j} \right):\; 1\leq i<j\leq n,\; (X_i,X_j)\in \mathcal{C}_{d,k}   \right\},
$$
run algorithm $\mathcal{A}$ with cost $p$ for the false positive error and cost $1-p$ for the false negative error to produce a (symmetric) classifier $g(x,x')$ on $\mathcal{C}_{d,k}$. 

\item {\bf Output} Define the subregions:
$$\mathcal{C}_{d+1,2k}=\{(x,x')\in \mathcal{C}_{d,k}:\; g(x,x')=+1  \} \text{ and } \mathcal{C}_{d+1,2k+1}=\mathcal{C}_{d,k}\setminus \mathcal{C}_{d+1,2k}.$$
\end{itemize}
}

}

\end{minipage}}

\end{center}
\label{subsection_3-1}

\subsection{Generalization Ability - Rate Bound Analysis}

We now prove that the theoretical guarantees formulated in the $\roc$ space equipped with the $\sup$ norm  that have been established for the {\sc TreeRank} algorithm in the standard bipartite ranking setup in \cite{CV09ieee} remain valid in the similarity learning framework. The rate bound result stated below is the analogue of Corollary 1 in \cite{CV09ieee}. The following technical assumptions are involved:
\begin{itemize}
\item the feature space $\mathcal{X}$ is bounded;
\item $\alpha\mapsto \roc^*(\alpha)$ is twice differentiable with a bounded first order derivative;
\item the class $\mathcal{A}$ is intersection stable, \textit{i.e.} $\forall (\mathcal{C},\; \mathcal{C}')\in \mathcal{A}^2$, $\mathcal{C}\cap \mathcal{C}'\in \mathcal{A}$;
\item the class $\mathcal{A}$ has finite {\sc VC} dimension $V<+\infty$;
\item we have $\{(x,x')\in \mathcal{X}^2:\; \eta(x,x')\geq q  \}\in \mathcal{A}$ for any $q\in [0,1]$;
\end{itemize}

\begin{theorem} \label{thm:rate} Assume that the conditions above are fulfilled. 
Choose $D=D_n$ so that $D_n\sim \sqrt{\log n}$, as $n\rightarrow \infty$, and let $s_{D_n}$ denote the output of the {\sc Similarity TreeRank} algorithm. Then, for all $\delta >0$, there exists a constant $\lambda$ s.t., with probability at least $1-\delta$, we have for all $n\geq 2$:
$
D_{\infty}(s_{D_n},s^*)\leq \exp(-\lambda \sqrt{\log n})$.
\end{theorem}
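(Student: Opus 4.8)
The strategy is the classical approximation–estimation (bias–variance) decomposition developed for \textsc{TreeRank} in \cite{CV09ieee}, adapted to the fact that here the empirical criteria are ratios of $U$-statistics rather than i.i.d.\ averages. Let $\bar{s}_{D}$ denote the \emph{idealized} output of the \textsc{Similarity TreeRank} algorithm, that is the similarity tree grown by the same recursion but with the true quantities $F_{\sigma}(\mathcal{C})$ substituted for their empirical counterparts $\widehat{F}_{\sigma,n}(\mathcal{C})$. Writing $\roc^*=\roc_{s^*}$, the triangle inequality in the $\sup$ norm gives
\[
D_{\infty}(s_{D_n},s^*)\;\leq\;\lVert \roc_{\bar{s}_{D_n}}-\roc^*\rVert_{\infty}\;+\;\lVert \roc_{s_{D_n}}-\roc_{\bar{s}_{D_n}}\rVert_{\infty},
\]
whose first term is deterministic (approximation) and whose second term is stochastic (estimation). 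I would bound the two separately and then optimize over the depth.

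For the approximation term I would reproduce the deterministic analysis of \cite{CV09ieee}. Since $\roc^*$ is twice differentiable with bounded first derivative and, by the assumption $\{\eta\geq q\}\in\mathcal{A}$ for every $q\in[0,1]$, the greedy population split of each cell can realize a piece of an optimal level set, the idealized procedure coincides with a Finite Element approximation of $\roc^*$ under an implicit scheme; the intersection-stability of $\mathcal{A}$ ensures that the recursively produced cells remain admissible. Under these conditions the $\sup$-norm error of the piecewise linear interpolant contracts geometrically at each refinement, so that $\lVert \roc_{\bar{s}_{D}}-\roc^*\rVert_{\infty}\leq C\,2^{-\kappa D}$ for constants $C,\kappa>0$ depending only on the curvature bound of $\roc^*$. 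This part transfers essentially verbatim from the non-pairwise setting, as it concerns only the geometry of ROC curves and not the data.

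The estimation term is the genuinely new ingredient and the crux of the argument. One must control, uniformly over $\mathcal{A}$ and over the $O(2^{D})$ cells arising along the recursion, the deviations $\lvert \widehat{F}_{\sigma,n}(\mathcal{C})-F_{\sigma}(\mathcal{C})\rvert$, which (as stressed after \eqref{eq:ratio}) are \emph{not} fluctuations of i.i.d.\ averages but of ratios of $U$-statistics of degree two. I would first linearize the ratio by showing that the normalizing $U$-statistics $n_{\sigma}$ concentrate around $\mathbb{P}\{Z=\sigma1\}$, reducing the task to the numerators. For these I would apply a Hoeffding decomposition, splitting each centred $U$-process into its projection (a genuine i.i.d.\ empirical process, controlled by a Vapnik–Chervonenkis argument at order $\sqrt{V/n}$) and a fully degenerate remainder of smaller order $O(V/n)$; the required maximal inequalities for $U$-processes indexed by the VC class $\mathcal{A}$ are exactly those established in \cite{VCB18} and \cite{Lee90}. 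A union bound over the $O(2^{D})$ cells, turning $\log(1/\delta)$ into $D\log 2+\log(1/\delta)$, then yields, with probability at least $1-\delta$,
\[
\sup_{\mathcal{C}\in\mathcal{A}}\bigl\lvert \widehat{F}_{\sigma,n}(\mathcal{C})-F_{\sigma}(\mathcal{C})\bigr\rvert \;\leq\; c\sqrt{\frac{V\,D+\log(1/\delta)}{n}}.
\]
Propagating these per-level fluctuations through the $D_n$ stages, along the lines of the stability analysis of \cite{CV09ieee}, bounds the estimation term by $C'\,2^{\rho D_n}\sqrt{(V D_n+\log(1/\delta))/n}$ for some $\rho>0$.

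It remains to balance the two contributions. With the prescribed choice $D_n\sim\sqrt{\log n}$ one has $2^{-\kappa D_n}=\exp(-\kappa\log 2\,\sqrt{\log n})$, whereas $2^{\rho D_n}=\exp(\rho\log 2\,\sqrt{\log n})$ is sub-polynomial in $n$, so the estimation term is of order $\exp(c\sqrt{\log n})\sqrt{\log n/n}=n^{-1/2+o(1)}$ and decays strictly faster than the approximation term. Hence the approximation term dominates and, absorbing constants (and the finitely many small values of $n$, for which the trivial bound $D_{\infty}\leq 1$ already holds) into the choice of $\lambda$, we obtain $D_{\infty}(s_{D_n},s^*)\leq \exp(-\lambda\sqrt{\log n})$ with probability at least $1-\delta$, as claimed. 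The main obstacle is precisely the estimation step: establishing exponential uniform concentration of the ratios of $U$-statistics $\widehat{F}_{\sigma,n}$ over $\mathcal{A}$ despite the strong dependence among the pairs $\{((X_i,X_j),Z_{i,j})\}_{i<j}$; by contrast, the approximation analysis and the final balancing are routine adaptations of the i.i.d.\ case.
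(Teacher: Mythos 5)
Your proposal is correct and follows essentially the same route as the paper: the paper's proof invokes the uniform concentration bound for the ratios of $U$-statistics $\widehat{F}_{\sigma,n}$ over the VC class $\mathcal{A}$ (Lemma 1 of \cite{VCB18}, itself proved by the Hoeffding-decomposition argument you sketch) and then defers to the argument of Corollary 1 in \cite{CV09ieee}, which is precisely the approximation/estimation decomposition, error propagation through the recursion, and balancing under $D_n\sim\sqrt{\log n}$ that you spell out. Your write-up merely makes explicit what ``following line by line the argument of Corollary 1 in \cite{CV09ieee}'' entails, and correctly identifies the $U$-statistic concentration step as the only genuinely new ingredient.
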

\begin{proof}
The proof is based on the following lemma, proved in \cite{VCB18} (in a more general version, the present one being a restriction to classes of indicator functions), which provides upper confidence bounds for the suprema of collections of ratios of $U$-statistics.
\begin{lemma} (Lemma 1, \cite{VCB18}) Suppose that Theorem \ref{thm:rate}'s assumptions are fulfilled. Let $\sigma\in\{-,\; + \}$. For any $\delta\in (0,1)$, we have with probability at least $1-\delta$,
$$
\sup_{\mathcal{C}}\left\vert \widehat{F}_{\sigma,n}(\mathcal{C})- F_{\sigma}(\mathcal{C}) \right\vert \leq  2C\sqrt{\frac{V}{n}}+2 \sqrt{\frac{\log (1/\delta)}{n-1}},
$$
where $C$ is a universal constant, explicited in \cite{Bousquet2004} (see page 198 therein).
\end{lemma}
This crucial result permits to control the deviation of the progressive outputs of the {\sc Similarity TreeRank} algorithm and those of the nonlinear approximation scheme (based on the true quantities) investigated in \cite{CV09ieee}. The proof can be thus derived by following line by line the argument of Corollary 1 in \cite{CV09ieee}.
$\square$
\end{proof}

This \textit{universal} logarithmic rate bound may appear slow at first glance but attention should be paid to the fact that it directly results from the hierarchical structure of the partition induced by the tree construction and the \textit{global} nature of the similarity learning problem.
As pointed out in \cite{CV09ieee} (see Remark 14 therein), the same rate bound holds true for the deviation in $\sup$ norm between the empirical $\roc$ curve $\widehat{\roc}(s_{D_n},.)$ output by the {\sc TreeRank} algorithm and the optimal curve $\roc^*$.


\section{Illustrative Numerical Experiments}\label{sec:exp}

\label{exp_sim_data}

To begin with, we study the ability of similarity ranking trees to retrieve the
optimal ROC curve for synthetic data, issued from a random tree of depth
$D_{gt}$ with a noise parameter $\delta$. Our experiments illustrate three
aspects of learning a similarity $s_D$ with TreeRank of depth $D$: the impact
of the class asymmetry $p_+ \ll 1-p_+$ as seen in the bounds of \cite{VCB18},
the trade-off between number of training instances and model complexity, see 
\cref{thm:rate}, and finally the impact of model biais.
Results are summarized in \cref{tab:simulated_data_experiments}. 
Details about the synthetic data experiments and real data experiments can be found
in the appendix.

\captionsetup[table]{skip=10pt}
\begin{table}
    \centering
    \noindent\makebox[\textwidth]{
    \scriptsize
    \begin{tabular}[t]{cllcllcll}
    \toprule
    \multicolumn{3}{c}{Class asymmetry} & 
    \multicolumn{3}{c}{Model complexity} & 
    \multicolumn{3}{c}{Model bias} \\
    \cmidrule(lr){1-3} \cmidrule(lr){4-6} \cmidrule(lr){7-9}
    $p_+$ & $D_1(s_{D}, s^*)$ & $D_\infty(s_{D}, s^*)$ & 
    $D_{\text{gt}}$ & $D_1(s_{D}, s^*)$ & $D_\infty(s_{D}, s^*)$ & 
    $D$ & $D_1(s_{D}, s^*)$ & $D_\infty(s_{D}, s^*)$ \\ 
    \cmidrule(lr){1-3} \cmidrule(lr){4-6} \cmidrule(lr){7-9}
    $0.5$ & $ 0.07 (\pm 0.07)$ & $ 0.30 (\pm 0.07)$ & 
     $1$ &  $0.00 (\pm 0.01)$ &  $0.06 (\pm 0.01)$ & 
     $1$ &  $0.21 (\pm 0.13)$ &  $0.65 (\pm 0.13)$ \\ 

     $10^{-1}$ &  $ 0.08 (\pm 0.08)$ & $ 0.31 (\pm 0.08)$ & 
     $2$ &  $0.03 (\pm 0.04)$ &  $0.20 (\pm 0.04)$ & 
     $2$ &  $0.11 (\pm 0.10)$ &  $0.43 (\pm 0.10)$ \\ 

     $10^{-3}$ &  $ 0.42 (\pm 0.17)$ & $ 0.75 (\pm 0.17)$ & 
     $3$ &  $0.07 (\pm 0.07)$ &  $0.30 (\pm 0.07)$ & 
     $3$ &  $0.07 (\pm 0.07)$ &  $0.30 (\pm 0.07)$ \\ 

     $2\cdot10^{-4}$ &  $ 0.45 (\pm 0.08)$ & $ 0.81 (\pm 0.08)$ & 
     $4$ &  $0.12 (\pm 0.09)$ &  $0.43 (\pm 0.09)$ & 
     $8$ &  $0.06 (\pm 0.06)$ &  $0.28 (\pm 0.06)$ \\ 
    \cmidrule(lr){1-3} \cmidrule(lr){4-6} \cmidrule(lr){7-9}
    \multicolumn{3}{l}{Parameters: $D= D_{gt} = 3$.} & 
    \multicolumn{3}{c}{$D_{gt} = D$, $p=0.5$.} & 
    \multicolumn{3}{c}{$D_{gt} = 3$, $p=0.5$.} \\ 
    \midrule
    \multicolumn{9}{l}{
	Shared parameters: $\X = \R^3$, $\delta = 0.01$, $n_{\text{test}}=100,000$,
	$n_{\text{train}} = 150 \cdot (5/4)^{D_{gt}^2}$.
    } \\
    \bottomrule
    \end{tabular}
}
    \hfill
    \caption{Synthetic data experiments.
    Between parenthesis are 95\%-confidence intervals 
    based off the normal approximation obtained on 400 runs.
    }
    \label{tab:simulated_data_experiments}
\end{table}

\section{Conclusion}
In situations where multi-class data are available, the objective of
\textit{similarity learning} can be naturally formulated as a $\roc$ curve
optimization problem, whose solutions are given by similarity functions
yielding a maximal true positive rate with a false positive rate below a fixed
value of reference, when thresholded at an appropriate level. Given the
importance of this learning task, that finds its motivation in many practical
problems, related to biometrics applications in particular, the present paper
proposes an extension of the recursive approach {\sc TreeRank} for $\roc$
optimization to the similarity framework. Precisely, from an algorithmic
viewpoint, it is shown how to adapt it in order to build \textit{symmetric}
scoring functions and, from a theoretical angle, the accuracy properties are
proved to be preserved in spite of the complexity of the data functional that
is optimized by the algorithm in a recursive manner. Experimental results
supporting the approach promoted are also presented.

%
%
%
 \bibliographystyle{abbrv}
 \bibliography{Ref_Ranking}
%

\section{Appendix}
Code is available on the authors' repository.
\footnote{\url{https://github.com/RobinVogel/On-Tree-based-methods-for-Similarity-Learning}}

\subsection{Acknowledgments}

This work was supported by IDEMIA. We thank the LOD reviewers for their
constructive input.

\subsection{Illustrative figures}

\Cref{fig:anom_tree} represents a fully grown tree of depth $3$ with its associated scores.
\Cref{fig:SLFRK} represents a split produced by the LeafRank procedure.

 \begin{figure}[ht]
 \begin{center}
     \centerline{\includegraphics[width=0.7\linewidth]{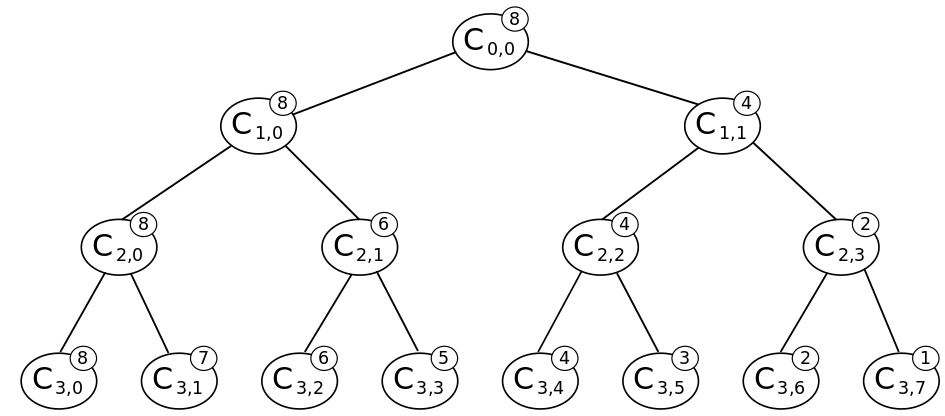}}
 \caption{A piecewise constant similarity function described by an oriented binary subtree $\mathcal{T}$. For any pair $(x,x')\in \mathcal{X}^2$, the similarity score $s_{\mathcal{T}}(x,x')$ can be computed very fast in a top-down manner using the heap structure: starting from the initial value $2^J$ at the root node, at each internal node $\mathcal{C}_{j,k}$, the score remains unchanged if $(x,x')$ moves down to the left sibling and one subtracts $2^{J-(j+1)}$ from it if $(x,x')$ moves down to the right.}
 \label{fig:anom_tree}
 \end{center}
 \end{figure}

 \begin{figure}[ht]
 \vskip -1cm
 \begin{center}
     \centerline{\includegraphics[width=0.8\linewidth]{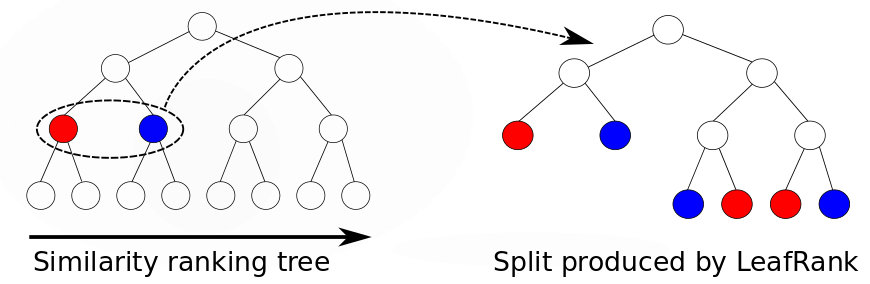}}
     \centerline{\includegraphics[width=0.3\linewidth]{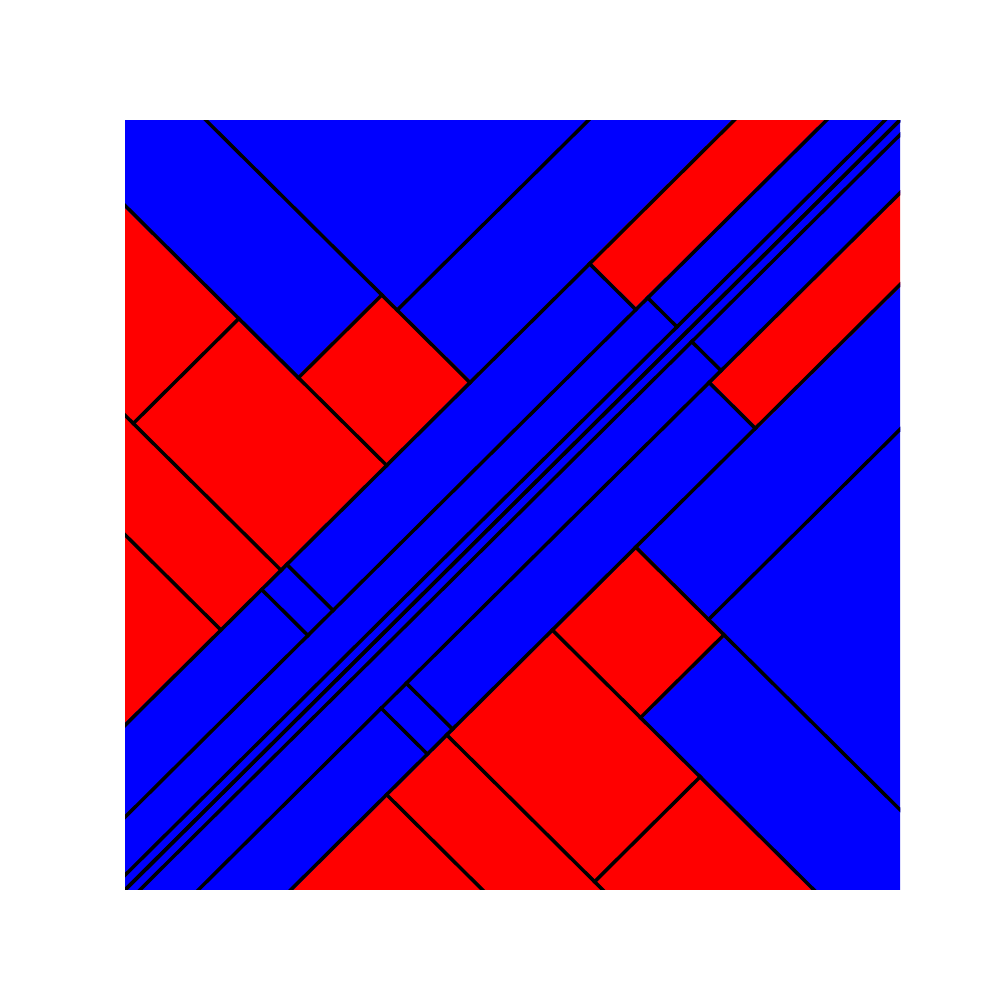}}
 \caption{Symmetric split produced by the {\sc Symmetric LeafRank} procedure.}
 \label{fig:SLFRK}
 \end{center}
 \end{figure}

\subsection{Representation of proposal functions for $\X\times\X = \R\times\R$}

We illustrate visually the outcomes of TreeRank for different 
proposition regions, for a similarity function on the unit square $[0,1]\times [0,1]$.
To obtain a symmetric similarity function, a natural approach is to transform the data using any
function $f : \X\times\X \to \text{Im}(f)$ such that $f(x,x') = f(x',x)$ and then choose a collection of
regions $\mathcal{D} \subset \mathcal{P}(\text{Im}(f))$, to form $\mathcal{C}$ such that 
\begin{align*}
    \mathcal{C} = \left\{ x,x' \in \X \times \X \; \vert \; f(x,x') \in D \right\}_{D \in \mathcal{D}}.
\end{align*}
The $i$-th element of the vector $f(x,x')$ will be written $f^{(i)}(x,x')$.

\begin{figure}
\centering
\begin{subfigure}{.33\textwidth}
  \centering
  \includegraphics[width=\linewidth]{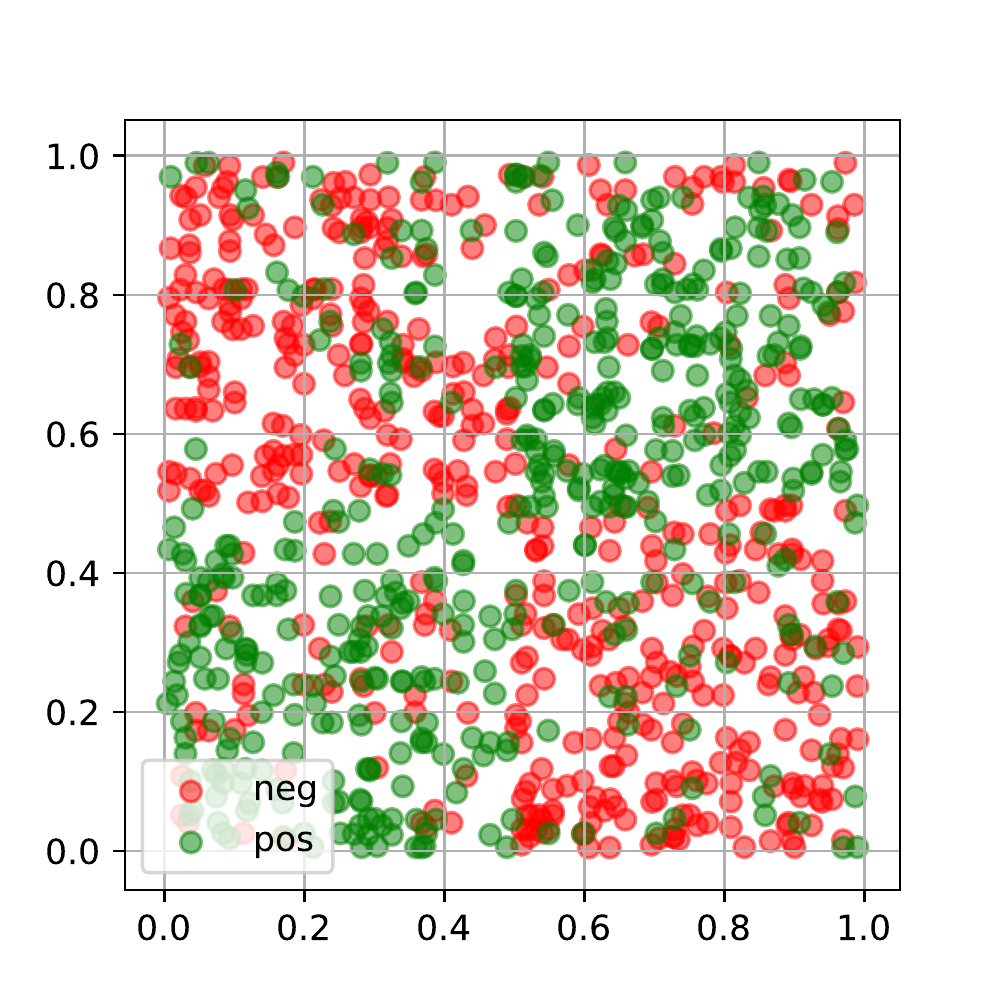}
  \caption{Training pairs}
  \label{fig:pairs}
\end{subfigure}%
\begin{subfigure}{.33\textwidth}
  \centering
  \includegraphics[width=\linewidth]{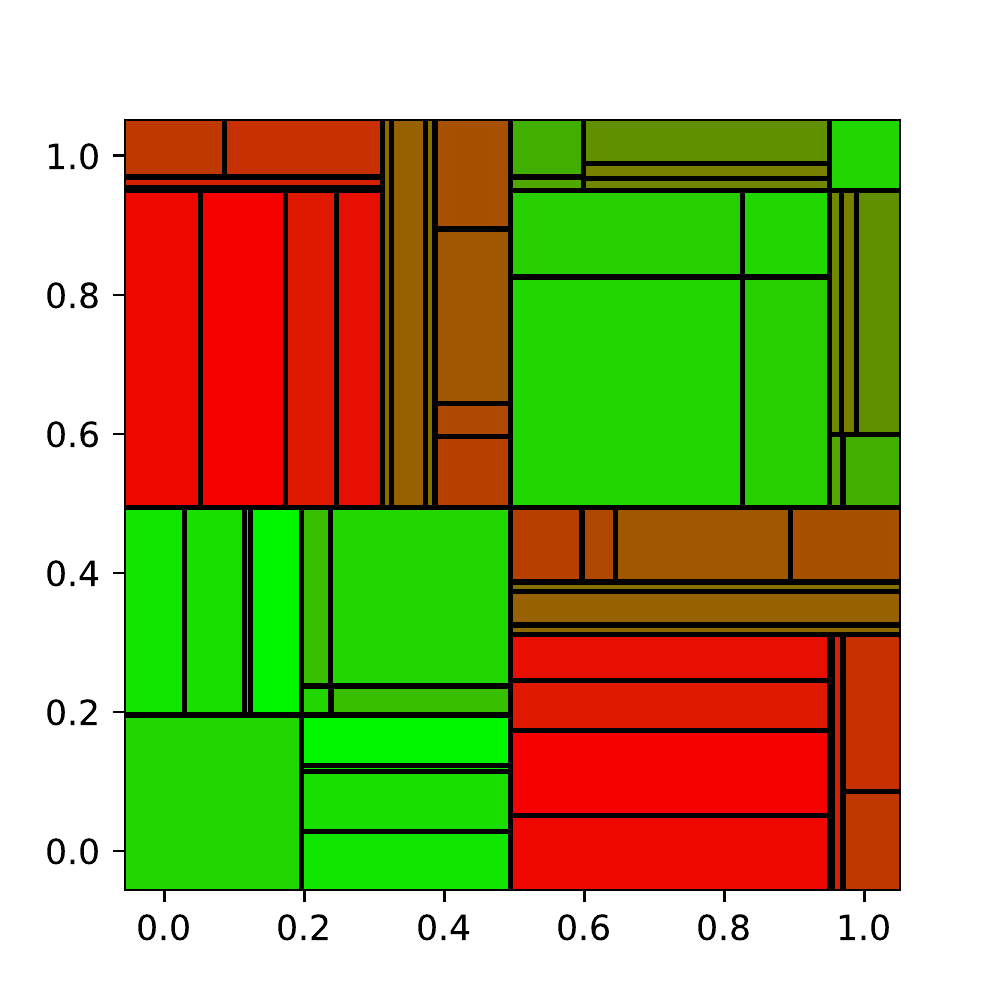}
  \caption{$C_{\text{sq}}$}
  \label{fig:reg_minmax}
\end{subfigure}%
\begin{subfigure}{.33\textwidth}
  \centering
  \includegraphics[width=\linewidth]{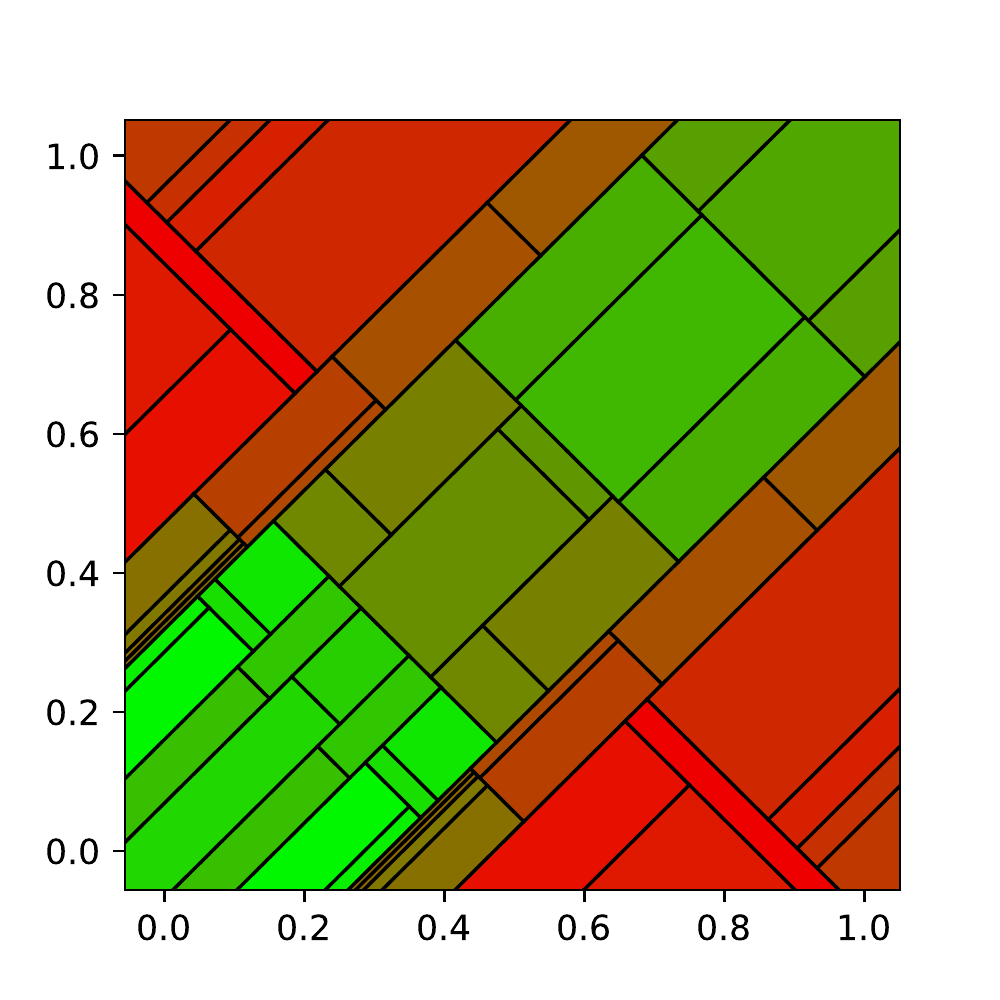}
  \caption{$C_{\text{diag}}$}
  \label{fig:reg_diag}
\end{subfigure}%
\caption{Representation of TreeRank score function with different proposal regions.
The $x$-axis corresponds to $x_1$ while the $y$-axis corresponds to $x_1'$.}
\label{fig:regions_illustration}
\end{figure}

In that context, we present two approaches:
\begin{itemize}
    \item Set $f(x,x') = \binom{x\vee x'}{x\wedge x'}$ where $x\vee x'$ and
	$x\wedge x'$ respectively stand for the element-wise maximum and
	minimum of $x$ and $x'$. We introduce the collection
	$\mathcal{C}_{\text{sq}}$ of all regions:
        \begin{align*}
            \left \{ x,x' \in \X \times \X \Big / 
                \left (\sigma f^{(i)}(x,x') \ge \sigma A \right )\otimes 
            \left (\sigma f^{(i+D)}(x,x') \le \sigma A \right) \right \}
        \end{align*}
	where $i \in \{ 1, \dots, D \}$, $\sigma \in \{-1, +1\}$, $A \in \R$
	and $\otimes$ is the standard XOR.
    \item Set $f(x,x') = \binom{\lvert x-x' \rvert}{x+x'}$. We introduce the collection
	$\mathcal{C}_{diag}$ of all regions:
        \begin{align*}
            \left \{ x,x' \in \X \times \X \Big / \sigma f^{(i)}(x,x') \ge \sigma A \right \}
        \end{align*}
        where $i \in \{ 1, \dots, D \}$, $\sigma \in \{-1, +1\}$, $A \in \R$.
\end{itemize}

We illustrate with \cref{fig:regions_illustration} the results of the outcome
of the TreeRank algorithm with either one of these two approaches, in a simple
case where $\X = [0,1]$, $\mu(x) = 1$, $K=2$ and $\p\{ Y=2 |X=x\} = 0.6 \cdot
\I\{x\ge 0.5 \} + 0.2$.
More complicated decision regions can be chosen, such as any linear decision
function on the transformation $f(x,x')$ of the pair $x,x'$. As stated in \cref{subsection_3-1},
those could be learned for example by an asymmetrically weighted SVM.\\

%
%

\subsection{Details about the synthetic data experiments of \cref{exp_sim_data}}

Assume a fully grown tree $\mathcal{T}$ of depth $D_{\text{gt}}$, with terminal
cells $\mathcal{C}_l \subset \X \times \X$ for all $0 \le l \le
L:=2^{D_{\text{gt}}}-1$. The tree is constructed with splits on the
transformation of the input space $\X\times\X$ by the function $f$ introduced
in \cref{lemma:sym_transform}. The split is chosen by selecting the split
variable uniformly at random, and the split value using a uniform law over that
variable on the current cell. The distribution of the data is assumed to be
defined by $p_+$, $F_+ = \sum_{l=1}^L \delta_l^+ \cdot
\mathcal{U}(\mathcal{C}_l)$ and $F_- = \sum_{l=1}^L \delta_l^- \cdot
\mathcal{U}(\mathcal{C}_l)$ where $\mathcal{U}(\mathcal{C}_l)$ is the uniform
distribution over $\mathcal{C}_l$.  Introduce $\sigma$ as the permutation that
orders the cells $C_l$ by decreasing $\delta_l^+ / \delta_l^-$, i.e.
$\delta_{\sigma(l)}^+ / \delta_{\sigma(l)}^- \ge \delta_{\sigma(l+1)}^+ /
\delta_{\sigma(l+1)}^- $ for all $0 \le l \le L-1$, then the optimal ROC curve
$\roc^*$ is the line that connects the dots $(0,0)$ and $(\sum_{j=0}^l
\delta_{\sigma(j)}^-, \sum_{j=0}^l \delta_{\sigma(j)}^+)$ for all $0\le l \le
L$. 

Now we detail our choice for the specification of the parameters $\delta^+_l$
and $\delta^-_l$.  Assume $\sigma$ to be the identity permutation. To study the
ability of our method to retrieve the optimal ROC curve for different levels of
statistical noise, introduce a noise parameter $0 < \delta < 1$ and fix
$\delta_l^+ = c_{\delta}^+ \cdot \delta^{l/L}$, and $\delta_l^- = 
c_{\delta}^- \cdot \delta^{-l/L}$ for all $0 \le l \le L$, with $c_{\delta}^+$ and
$c_{\delta}^-$ normalization constants in $l$ such that both sets
$\{\delta_l^+\}_{0 \le l \le L}$ and $\{\delta_l^-\}_{0 \le l \le L}$ sum to
one.

When $\delta$ is close to $0$, $\roc^*$ approaches the unit step, 
whereas when $\delta$ is close to $1$, $\roc^*$ approaches the ROC of random assignment. 
The experiments presented here used $\delta = 0.01$, which makes
for an $\auc^*$ of $0.96$ approximately. By varying the parameter $\delta$, one can
study the outcome of our approach for different levels of statistical noise.

The first experiment shows that the learned model $s_D$ generalizes poorly when
positive instances are rare, as shown in the bounds of \cite{VCB18}. The second
one that when $D_n \sim \sqrt{\log n }$, learned models stay decent, as show by
\cref{thm:rate}. The last experiment illustrates the fact that using an overly
deep tree comparatively to the ground truth does not hinder performance, thanks
to the global nature of the ranking problem.


\subsection{Real data experiments}\label{exp_real_data}
We compare the performance of our approach to the widely acclaimed metric
learning technique LMNN, see \cite{Weinberger2009}, as well as a similarity
derived from the cosine similarity of a low-dimensional neural network encoding
of the instances, optimized for classification with a softmax cross-entropy
loss. For that matter, we use the MNIST database with reduced dimensionality by
PCA.  The neural network approach is inspired by state of the art techniques in
applications of similarity learning, such as in facial recognition.  It has
shown outstanding performance, but is not directly derived from the ranking
problem that these systems usually tackle.

The MNIST database of handwritten digits has a training set of 60,000 images and a test set
of 10,000 images and is widely used to benchmark classification algorithms. 
Each image represents a number between 0 and 9 with a monochrome image of $28\times 28$
pixels, which makes for $K=10$ classes and an initial dimensionality of $784$.
The standard principal components analysis (PCA) was set to keep $95\%$
of the explained variance, which reduces the dimensionality of the data to $d=153$. 
This first step was necessary to limit the memory requirements of the LMNN algorithm.
We used the implementation of LMNN provided by the python package \emph{metric-learn},
and changed the regularization parameter to be $0.01$.

The neural network approach learned an encoding $e: \X = \R^d \to \R^{d_e}$ of size
$d_e=128$, used for classification at training time, with a simple
softmax-cross entropy behind a fully connected $d_e\times K$ layer.
The encoding was composed of three stacked fully connected layers followed
by ReLU activations of sizes $153\times 146$, $146\times 140$ and $140 \times 134$,
and finally a $134\times 128$ fully connected layer without an activation function.
These layer sizes are arbitrary and were simply chosen as a linear interpolation
between the input size $d$ and output size $d_e$. The similarity between two instances
is computed using a simple cosine similarity between their embeddings.

Our approach was based off a ranking forest with for symmetric LeafRank an
asymmetric classification tree over the transformed data of fixed depth $5$,
see \cref{fig:SLFRK} for an exemple of this type of proposal region. The
ranking forest aggregates the results of 44 trees of depth 15
learned on only $10^5$ pairs each. Refer to \cite{CDV09} and \cite{CDV13} for
details on ranking forests. ROC curve plots are shown in \cref{fig:roc_models}. 
For now, our method shows higher performance than the linear metric learning approach,
but performs worse than the neural network encoding approach. Further work will aim
to improve the performance of our approach, perhaps with a better LeafRank algorithm.

\begin{figure}
\centering
\includegraphics[width=0.8\linewidth]{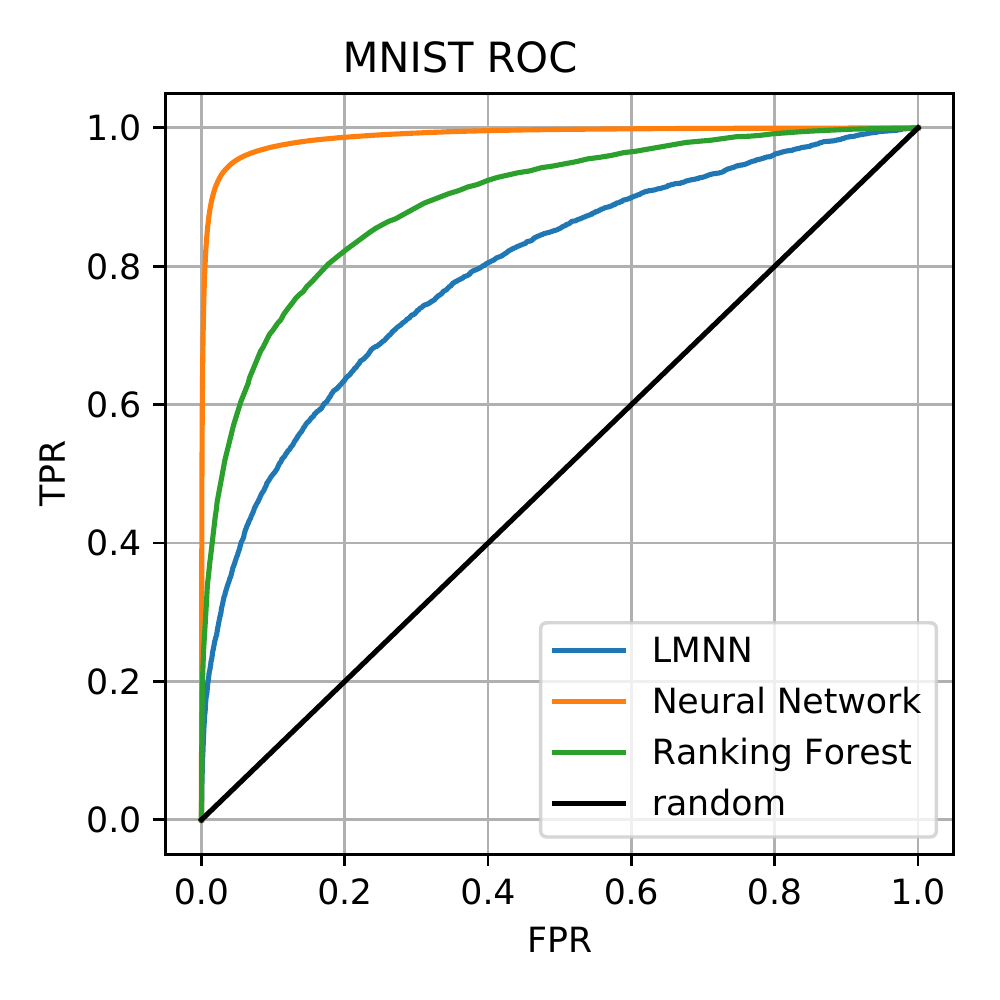}
\caption{ROC curves for the real data experiments.}
\label{fig:roc_models}
\end{figure}

\end{document}